\documentclass[letterpaper, 10 pt, conference]{ieeeconf}  

\IEEEoverridecommandlockouts                              

\usepackage[utf8]{inputenc} 
\usepackage[T1]{fontenc}    
\usepackage{hyperref}       
\usepackage{url}            
\usepackage{booktabs}       
\usepackage{amsfonts}       
\usepackage{nicefrac}       
\usepackage{microtype}      
\usepackage{graphicx}
\usepackage{float}
\usepackage{algorithm,algorithmic}
\usepackage{amsmath}  

{
      \newtheorem{assumption}{Assumption}
      \newtheorem{remark}{Remark}
      
      \newtheorem{theorem}{\textbf{Theorem}}

}
\usepackage{caption}
\usepackage{subcaption}

\title{\LARGE \bf
Topological Online Learning for Displacement-based Formation Control}

\author{Shubhankar Gupta$^{*\ 1}$ Saksham Sharma$^{*\ 2}$ Sumant A Gunagi$^{3}$ Suresh Sundaram$^{4}$
\thanks{* Shubhankar and Saksham have contributed equally}
\thanks{$^{1,3,4}$Shubhankar, Sumanth and Suresh are with the Department of Aerospace Engineering, Indian Institute of Science, India
       {\tt\small shubhankarg@iisc.ac.in}, {\tt\small vssuresh@iisc.ac.in}, {\tt\small sumantgunagi@iisc.ac.in}}%
\thanks{$^{2}$Saksham is with ArtGarage, Indian Institute of Science, India 
       {\tt\small saksham.prob@gmail.com}}%
}

\begin{document}

\maketitle
\thispagestyle{empty}
\pagestyle{empty}

\begin{abstract}
This paper addresses the problem of robust formation control by introducing Topological Online Learning for Displacement-based (TOLD) formation control, a real-time edge-level adaptation framework. Unlike conventional node-level robust controllers that regulate individual robot inputs without modifying the interaction topology, TOLD updates the interaction topology weights online to directly minimize formation distortion. Two strategies are proposed under the TOLD formation control framework: Online Gradient Flow (OGF) with unconstrained weights and Online Exponential Gradient Flow (OExpGF) with non-negative convex weights. Theoretical analysis establishes that, for single-integrator agents over directed graphs, OExpGF guarantees asymptotic consensus, while OGF ensures bounded formation distortion. Simulations with twelve robots under intermittent disturbances show $1.2\%$–$33.14\%$ median cumulative Root Mean Distortion Error reduction when augmenting TOLD with node-level controllers. Hardware experiments with Crazyflie 2.0 quadrotors demonstrate over $62\%$ (OGF) and $31.4\%$ (OExpGF) reduction in median formation distortion compared to fixed-weight consensus. 
\end{abstract}

\section{INTRODUCTION}
Multi-robot formation control coordinates robot motion to maintain desired geometric arrangements under changing conditions \cite{bu2024advancement}. Strategies are broadly classified into conventional and AI-based methods. Conventional approaches include leader-follower \cite{desai1998controlling}, virtual structure \cite{lewis1997high}, behavior-based \cite{giulietti2001formation}, consensus-based \cite{ren2006consensus}, and Artificial Potential Field methods \cite{leonard2001virtual}, offering stability but limited adaptability \cite{bu2024advancement}. AI-based methods such as Artificial Neural Networks \cite{wang2015optimal} and Deep Reinforcement Learning \cite{zhou2019adaptive} provide adaptability but demand large datasets, high computation, and lack interpretability and guarantees. Among conventional methods, consensus-based control offers a unified framework encompassing other strategies as special cases \cite{ren2006consensus}. Hence, consensus-like methods with enhanced adaptability, robustness, and interpretability are desirable. 

Consensus algorithms have been widely extended for formation control in diverse settings. They have been integrated with artificial potential fields for obstacle avoidance \cite{kuriki2014consensus}, used to design robust model-free schemes for first-order nonlinear systems over undirected networks \cite{bechlioulis2015robust}, and applied in vision-based distributed control with sensor fusion from cameras and IMUs \cite{montijano2016vision}. Other works analyze consensus under distance-dependent communication \cite{jing2016consensus}, propose distributed control for UAVs in constrained environments \cite{wang2020distributed}, develop smooth polar-coordinate-based laws for non-holonomic vehicles \cite{restrepo2021distributed}, and introduce displacement-based methods to address orientation misalignments via angular velocity control \cite{li2024displacement}.

To handle environmental uncertainties, robust physical-layer (node-level) strategies such as adaptive gain control \cite{r1}, decay-gain methods via convex analysis \cite{r2}, and disturbance observers \cite{r4} have been developed; however, they regulate individual robot inputs without adapting the interaction topology. Topological approaches using GNNs \cite{jiang2023gnn, goarin2024gnn} or edge-based protocols \cite{ning2024fully} offer promise but typically require extensive offline training, suffer from sim-to-real gaps and computational latency, and lack strict stability guarantees. Consequently, many practical consensus schemes still employ fixed or distance-dependent weights \cite{jing2016consensus}, without active real-time formation distortion mitigation.

To address these gaps, this paper proposes two robust topological-layer (edge-level) adaptation strategies within the Topological Online Learning for Displacement-based (TOLD) formation control framework: Online Gradient Flow (OGF), employing unconstrained weight updates inspired by online gradient descent \cite{biehl1995learning}, and Online Exponential Gradient Flow (OExpGF), enforcing non-negative convex weights via exponentiated gradient dynamics \cite{hill2001convergence}. Both adapt interaction weights in real time to minimize formation distortion. As such, TOLD acts as a topological ‘routing’ overlay that can be seamlessly combined with physical-layer (node-level) robust controllers for synergistic robustness.

In this paper, robots are modeled as single-integrator agents equipped with relative position sensors to estimate neighbor displacements within a directed graph topology. The continuous-time OGF and OExpGF algorithms are introduced along with their time-discretized versions. Theoretical analysis establishes that OExpGF guarantees asymptotic consensus, whereas OGF ensures bounded formation distortion. 
MATLAB simulations with twelve robots in a leaderless formation under intermittent disturbances show that augmenting node-level controllers with OGF reduces the median cumulative Root Mean Distortion Error by approximately $1.4\%$–$33.14\%$, while OExpGF achieves reductions of $1.2\%$–$30.88\%$, relative to the corresponding standalone physical-layer methods. Hardware experiments on a Qualisys testbed with Crazyflie 2.0 quadrotors in a leader–follower formation further demonstrate that OGF and OExpGF lower median formation distortion by over $62\%$ and $31.4\%$, respectively, compared to fixed weights consensus. Overall, the results highlight a trade-off: OGF’s unconstrained weights deliver superior formation accuracy but may grow large, whereas OExpGF enforces constrained convex weights, enhancing long-term implementability and predictability.

\begin{figure*}[htbp]
    \centering
    \includegraphics[width=\textwidth]{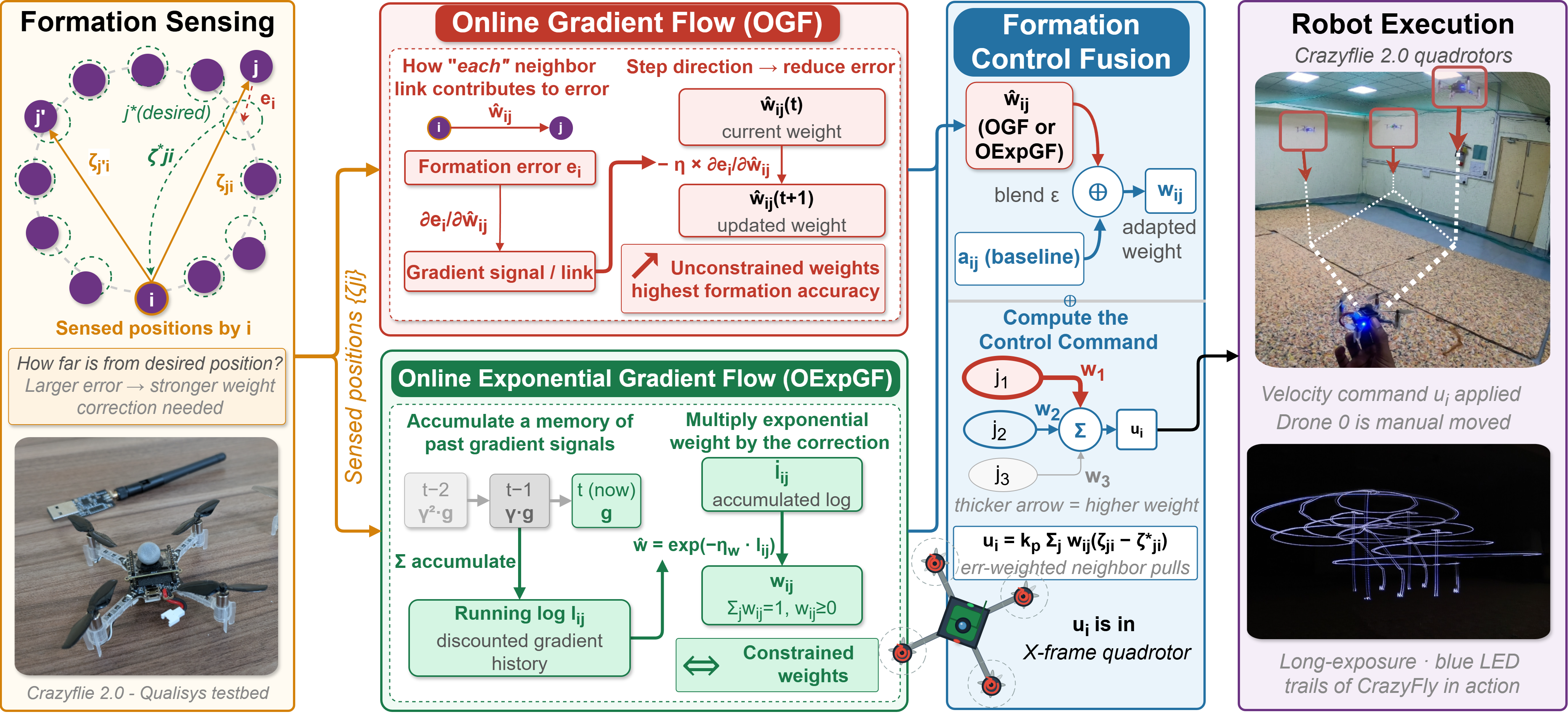}
    \caption{Overview of the TOLD formation control framework. OGF and OExpGF adapt the robot-robot interaction topology weights online from sensed relative positions; the fused weights drive the formation control law $\mathbf{u}_i$ in a closed loop.}
  \label{Fig:Overview}
\end{figure*}

\section{Problem Formulation}
The multi-robot formation control problem considered here is displacement-based, where $N$ robots aim to maintain a desired geometric formation defined by their displacements relative to neighbors. Robots interact by directly observing neighbors’ relative positions via exteroceptive sensors (e.g., camera, LiDAR, Radar). The interaction network is a directed graph $\mathcal{G}$ with row-stochastic weighted adjacency matrix $\mathbf{A} = [a_{ij}] \in \mathbb{R}^{N\times N}$, where $\sum_{j=1}^{N} a_{ij} = 1$, $a_{ij} > 0$ if an interaction link exists from robot $i$ to $j$, and $a_{ij} = 0$ otherwise, with $a_{ii} = 0$. Let $\mathcal{N}_i$ denote the neighbor set of robot $i$ in $\mathcal{G}$.

Considering interactions via direct observations through exteroceptive sensors, the $i^{th}$ robot’s estimate of the relative position of robot $j$ with respect to itself, denoted as $\zeta_{ji}$, can be modeled as 
\begin{equation} \label{e2.01}
    \zeta_{ji} = \mathbf{x}_{j} -\mathbf{x}_i
\end{equation}
where $\mathbf{x}_i \in \mathbb{R}^n$ and $\mathbf{x}_j \in \mathbb{R}^n$ are the $n$-dimensional positions ($m$) of the $i^{th}$ robot and the $j^{th}$ robot, respectively. 
Given a geometrically realizable target formation, let the target displacements for robot $i$ be $\mathbf{\zeta}_{ji}^*$, $\forall j \in \mathcal{N}_i$. The robots know target displacements only for their neighbors $j \in \mathcal{N}_i$, not for the entire multi-robot system.

Consider $N$ robots following the single integrator kinematic model, $\forall i \in [N]$:
\begin{equation} \label{e1}
    \dot{\mathbf{x}}_i = \mathbf{u}_i \; , \; \mathbf{x}_i(0)
\end{equation}

where $\mathbf{x}_{i} \in \mathbb{R}^n$ is the $n$-dimensional position ($m$), $\mathbf{u}_{i} \in \mathbb{R}^n$ is the $n$-dimensional velocity control input ($m/s$) of the $i^{th}$ robot. 
With $\mathbf{A} = [a_{ij}]$ as the row-stochastic weighted adjacency matrix of the interaction network $\mathcal{G}$, the robots employ a consensus-based formation control law as follows
\begin{equation} \label{e2}
    \mathbf{u}_i = k_p \sum_{j=1}^{N} a_{ij} (\zeta_{ji} - \mathbf{\zeta}_{ji}^*)
\end{equation} 
where $[\mathbf{A}]_{ij} = a_{ij} \geq 0$, $k_p$ is the control gain, $\zeta_{ji}^*$ is the target displacement between the $j^{th}$ robot and the $i^{th}$ robot w.r.t. the $i^{th}$ robot, $\zeta_{ji}$ is the $i^{th}$ robot's relative displacement sensor's measurement of the relative position of $j^{th}$ robot w.r.t. the $i^{th}$ robot. 

\begin{assumption}
  The target formation, as defined by the target displacements $\mathbf{\zeta}_{ji}^*$, is geometrically realizable or consistent, such 
 that one can find $\mathbf{x}_j^*, \mathbf{x}_i^* \in \mathbb{R}^n$ where $\zeta_{ji}^*=\mathbf{x}_j^*-\mathbf{x}_i^*$. 
\end{assumption} 

Define $\mathbf{y}_i := \mathbf{x}_i - \mathbf{x}_i^*$ and $\mathbf{y} := [\mathbf{y}_1^\text{T} \; \mathbf{y}_2^\text{T} \; \cdots \; \mathbf{y}_N^\text{T}]^\text{T}$, where $(\cdot)^\text{T}$ is the transpose operation. 

Using equations (\ref{e1}) and (\ref{e2}), and substituting $\zeta_{ji}^*=\mathbf{x}_j^*-\mathbf{x}_i^*$ and noting that $\mathbf{y}_i = \mathbf{x}_i - \mathbf{x}_i^*$, we get
\begin{equation} \label{e2.3}
    \dot{\mathbf{y}}_i = k_p \sum_{j=1}^{N} a_{ij} (\mathbf{y}_j - \mathbf{y}_i)
\end{equation}
Stacking up equation (\ref{e2.3}), $\forall i \in [N]$, leads to the following:
\begin{equation} \label{e2.4}
    \dot{\mathbf{y}} = - k_p (\mathbf{L} \otimes \mathbf{I}_n) \mathbf{y}
\end{equation}
where the Laplacian matrix $\mathbf{L} = \mathbf{I}_N - \mathbf{A}$. Here, $\mathbf{I}_N$ is a $N \times N$ identity matrix, whereas $\mathbf{I}_n$ is a $n \times n$ identity matrix, and $\otimes$ is the Kronecker product operator.

\begin{assumption}
    The directed graph $\mathcal{G}$ contains a rooted spanning tree, i.e., there exists at least one subgraph of $\mathcal{G}$ which contains a node (robot) called root, which has no incoming links and from which all the nodes of the graph $\mathcal{G}$ can be reached. That is, there exists at least one such node $k \in [N]$, such that all the other nodes of the graph $\mathcal{G}$ may be reached from node $k$. 
\end{assumption}

Assumption 2 is a standard requirement for consensus in directed graphs. Given that $\mathbf{A}$ is row-stochastic with non-negative entries, under Assumptions 1 and 2, it has been shown that the solution to equation (\ref{e2.4}) reaches consensus \cite{ren2008distributed}, ensuring that the target formation is achieved for robots following the single-integrator dynamics and the control law in equations (\ref{e1}) and (\ref{e2}).

\section{Topological Online Learning for Displacement-based Formation Control}     
Denote the $i^{th}$ robot's neighborhood formation distortion error as follows:
\begin{equation} \label{e12}
    e_i(t) := \frac{1}{2} \sum_{j \in \mathcal{N}_i} \| \mathbf{x}_{j} -\mathbf{x}_i - \mathbf{\zeta}_{ji}^* \|^2 
\end{equation}
where  $\|\cdot\|$ is the 2-norm or Euclidean distance operator. From equation (\ref{e2.01}), consider $\zeta_{ji} = \mathbf{x}_{j} -\mathbf{x}_i$ as the $i^{th}$ robot's exteroceptive sensor's estimate of the relative position of $j^{th}$ robot w.r.t. the $i^{th}$ robot.

Two approaches are proposed under the Topological Online Learning for Displacement-based (TOLD) formation control framework, differing in the online learning method used to update the weights in the consensus-like weighted error sum that serves as the robots’ control input. The overall framework is conceptually illustrated in Fig. \ref{Fig:Overview}.

The TOLD formation control law is given as follows:
\begin{equation} \label{e3}
    \mathbf{u}_i = k_p \sum_{j=1}^{N} w_{ij}(t) (\mathbf{\zeta}_{ji} - \mathbf{\zeta}_{ji}^*)
\end{equation}
where the weights $w_{ij}(t)$ can be updated using either of the following two approaches:

\textit{1. Online Gradient Flow (OGF):}
\begin{equation} \label{e4.11}
    w_{ij}(t) = (1-\epsilon) \, a_{ij} \, \hat{w}_{ij}(t) + \epsilon \, a_{ij} 
\end{equation}
Here, $\epsilon$ is a user-defined parameter ($0 < \epsilon \ll 1$), which ensures that the original connectivity (as per the graph $\mathcal{G}$ associated with $\mathbf{A}$) is preserved when $\hat{w}_{ij}(t) = 0$ for the graph associated with $\mathbf{W}(t)$, where $[\mathbf{W}(t)]_{ij} = w_{ij}(t)$. Note that the weights $w_{ij}(t) \in \mathbb{R}$.

The weights $\hat{w}_{ij}(t)$ are updated using the following differential equation (with $\hat{w}_{ij}(0) = 1$):
\begin{equation} \label{e5.11}
   \dot{\hat{w}}_{ij}(t) =  -\eta \; \frac{\partial e_i(t)}{\partial \hat{w}_{ij}} \;
\end{equation} 
where $\eta > 0$ is the learning rate parameter. The time-discretized formulation (Algorithm \ref{alg01}) is employed for both simulation and practical implementation.
\begin{algorithm} [t]
 \caption{: Time-Discretized Online Gradient Flow}
 \begin{algorithmic}[1] \label{alg01}
 \renewcommand{\algorithmicensure}{\textbf{Choose:}}
 \renewcommand{\algorithmicrequire}{\textbf{Initialization:}}
 \ENSURE  $\eta > 0$, $0 < \epsilon \ll 1$, $\Delta T$, $k_p$
 \REQUIRE $\hat{w}_{ij}(0) = 1$, $\frac{\partial x_i(0)}{\partial \hat{w}_{ij}} = \mathbf{0}_{n\times 1}$ \\
 \textbf{Input:} $\{\zeta_{ji}(k)\}_{j\in \mathcal{N}_i}, \hat{w}_{ij}(k), \frac{\partial x_i(k)}{\partial \hat{w}_{ij}}$ \\
 \textbf{Output:} $u_{\text{OGF}}(k), \hat{w}_{ij}(k+1), \frac{\partial x_i(k+1)}{\partial \hat{w}_{ij}}$ 
  \STATE $\nabla_{\mathbf{x}_i} e_i(k) = - \sum_{j' \in \mathcal{N}_i} (\mathbf{\zeta}_{j'i}(k) - \mathbf{\zeta}_{j'i}^*)$
  \STATE $\frac{\partial e_i(k)}{\partial \hat{w}_{ij}} = \left\langle \nabla_{\mathbf{x}_i} e_i(k) , \frac{\partial \mathbf{x}_i(k)}{\partial \hat{w}_{ij}} \right\rangle$ 
  \STATE $\hat{w}_{ij}(k+1) = \hat{w}_{ij}(k) - \Delta T \cdot \eta \cdot \frac{\partial e_i(k)}{\partial \hat{w}_{ij}}$
  \STATE $w_{ij}(k) = (1-\epsilon) a_{ij} \hat{w}_{ij}(k) + \epsilon \, a_{ij}$
  \STATE $\frac{\partial x_i(k+1)}{\partial \hat{w}_{ij}} = \frac{\partial x_i(k)}{\partial \hat{w}_{ij}} + \Delta T \cdot k_p \cdot (\mathbf{\zeta}_{ji}(k) - \mathbf{\zeta}_{ji}^*)$
  \STATE $u_{\text{OGF}}(k) = k_p \sum_{j=1}^{N} w_{ij}(k) (\mathbf{\zeta}_{ji}(k) - \mathbf{\zeta}_{ji}^*)$
 \end{algorithmic} 
\end{algorithm}

\textit{2. Online Exponential Gradient Flow (OExpGF):}
\begin{equation} \label{e4}
    w_{ij}(t) = (1-\epsilon) \frac{a_{ij} \hat{w}_{ij}(t)}{\sum_{j' \in \mathcal{N}_i} a_{ij'} \hat{w}_{ij'}(t)} + \epsilon \, a_{ij} 
\end{equation}
where $\mathcal{N}_i$ is the neighbor set corresponding to the $i^{th}$ robot, and $\epsilon$ is a user-defined parameter ($0 < \epsilon \ll 1$), which ensures that the original connectivity (as per the graph $\mathcal{G}$ associated with $\mathbf{A}$) is preserved at all times for the graph associated with $\mathbf{W}(t)$, where $[\mathbf{W}(t)]_{ij} = w_{ij}(t)$. This implies that the graph associated with $\mathbf{W}(t)$ also contains a rooted spanning tree at any time $t$. Note that $w_{ij}(t) = a_{ij}$ if $\hat{w}_{ij}(t) = \hat{w}_{ij'}(t)$, $\forall j' \in \mathcal{N}_i$. 

The weights $\hat{w}_{ij}(t)$ are updated as follows:
\begin{equation} \label{e5}
   \hat{w}_{ij}(t) =  \exp{(-\eta_w \; l_{ij}(t))} 
\end{equation}
where 
\begin{equation} \label{e6}
   l_{ij}(t) := \int_{0}^{t} \gamma^{t-t'} \; \frac{\partial e_i(t')}{\partial w_{ij}} \; dt'
\end{equation}
Here, $\gamma \in [0,1]$ is the discount factor (for discounting the past gradients), and $\|\cdot\|$ is the 2-norm or Euclidean distance operator. Note that the weights $w_{ij}(t)$ are non-negative and satisfy $\sum_{j=1}^N w_{ij}(t) = 1$, $\forall i \in [N]$.  

Further, taking derivative of equation \ref{e6} yields:
\begin{equation} \label{e7}
   \dot{l}_{ij}(t) = \log{\gamma} \; \; l_{ij}(t) + \frac{\partial e_i(t)}{\partial w_{ij}} 
\end{equation}
which is valid for $\gamma \in (0,1]$. The derivative of equation (\ref{e5}) yields:
\begin{equation} \label{e8}
   \dot{\hat{w}}_{ij}(t) =  -\eta_w \; \hat{w}_{ij}(t) \left(\log{\gamma} \; \; l_{ij}(t) + \frac{\partial e_i(t)}{\partial w_{ij}} \right)
\end{equation}
where $\eta_w > 0$ is the learning rate parameter, and $\hat{w}_{ij}(0) = 1$. 

\begin{algorithm} [t]
 \caption{: Time-Discretized Online Exponential Gradient Flow}
 \begin{algorithmic}[1] \label{alg02}
 \renewcommand{\algorithmicensure}{\textbf{Choose:}}
 \renewcommand{\algorithmicrequire}{\textbf{Initialization:}}
 \ENSURE  $\eta_w > 0$, $0 <\gamma \leq 1$, $0 < \epsilon \ll 1$, $\Delta T$, $k_p$
 \REQUIRE $\hat{w}_{ij}(0) = 1$, $\frac{\partial x_i(0)}{\partial w_{ij}} = \mathbf{0}_{n\times 1}$, $l_{ij}(0) = 0$ \\
 \textbf{Input:} $\{\zeta_{ji}(k)\}_{j\in \mathcal{N}_i}, \hat{w}_{ij}(k), \frac{\partial x_i(k)}{\partial w_{ij}}$, $l_{ij}(k)$ \\
 \textbf{Output:} $u_{\text{OExpGF}}(k), \hat{w}_{ij}(k+1), \frac{\partial x_i(k+1)}{\partial w_{ij}}, l_{ij}(k+1)$ 
  \STATE $\nabla_{\mathbf{x}_i} e_i(k) = - \sum_{j' \in \mathcal{N}_i} (\mathbf{\zeta}_{j'i}(k) - \mathbf{\zeta}_{j'i}^*)$
  \STATE $\frac{\partial e_i(k)}{\partial w_{ij}} = \left\langle \nabla_{\mathbf{x}_i} e_i(k) , \frac{\partial \mathbf{x}_i(k)}{\partial w_{ij}} \right\rangle$ 
  \STATE ${l}_{ij}(k+1) = {l}_{ij}(k) + \Delta T  (\log{\gamma} \; \; l_{ij}(k) + \frac{\partial e_i(k)}{\partial w_{ij}})$
  \STATE $\hat{w}_{ij}(k+1) = \hat{w}_{ij}(k) - \Delta T \eta_w \hat{w}_{ij}(k) (\log{\gamma} \; l_{ij}(k) + \frac{\partial e_i(k)}{\partial w_{ij}})$
  \STATE $w_{ij}(k) = (1-\epsilon) \frac{a_{ij} \hat{w}_{ij}(k)}{\sum_{j' \in \mathcal{N}_i} a_{ij'} \hat{w}_{ij'}(k)} + \epsilon \, a_{ij}$
  \STATE $\frac{\partial x_i(k+1)}{\partial w_{ij}} = \frac{\partial x_i(k)}{\partial w_{ij}} + \Delta T \cdot k_p \cdot (\mathbf{\zeta}_{ji}(k) - \mathbf{\zeta}_{ji}^*)$
  \STATE $u_{\text{OExpGF}}(k) = k_p \sum_{j=1}^{N} w_{ij}(k) (\mathbf{\zeta}_{ji}(k) - \mathbf{\zeta}_{ji}^*)$
 \end{algorithmic} 
\end{algorithm}
For both OGF (replace $w_{ij}$ with $\hat{w}_{ij}$ in the following) and OExpGF, the gradient $\frac{\partial e_i(t)}{\partial w_{ij}}$ is calculated as follows:
\begin{equation} \label{e9}
    \frac{\partial e_i(t)}{\partial w_{ij}} = \left\langle \nabla_{\mathbf{x}_i} e_i(t) , \frac{\partial \mathbf{x}_i(t)}{\partial w_{ij}} \right\rangle 
\end{equation}
\begin{equation} \label{e10}
     \nabla_{\mathbf{x}_i} e_i(t) = - \sum_{j' \in \mathcal{N}_i} (\mathbf{\zeta}_{j'i} - \mathbf{\zeta}_{j'i}^*) 
\end{equation}
and
\begin{equation} \label{e11}
    \frac{\partial \mathbf{x}_i(t)}{\partial w_{ij}} = \int_{0}^{t} k_p \;(\mathbf{\zeta}_{ji} - \mathbf{\zeta}_{ji}^*) \; dt'
\end{equation} 
Here $\langle\cdot,\cdot\rangle$ is the dot product operator. The time-discretized formulation (Algorithm \ref{alg02}) is employed for both simulation and practical implementation. 
During OExpGF's implementation, the gradients $\frac{\partial e_i(t)}{\partial w_{ij}}$ given by equation (\ref{e9}) are normalized by dividing them by $\max\{1,\max_{j\in\mathcal{N}_i}|{\frac{\partial e_i(t)}{\partial w_{ij}}}|\}$ to avoid numerical instability. 

\section{Theoretical Analysis}
\subsection{Online Exponential Gradient Flow (OExpGF)}
Use equations (\ref{e1}) and (\ref{e3}), and substitute $\zeta_{ji}^*=\mathbf{x}_j^*-\mathbf{x}_i^*$ (from Assumption 1). Further, considering $\mathbf{y}_i = \mathbf{x}_i - \mathbf{x}_i^*$ and noting that $\sum_{j=1}^N w_{ij}(t) = 1$, we get
\begin{equation} \label{e15}
    \dot{\mathbf{y}}_i = k_p \sum_{j=1}^{N} w_{ij}(t) (\mathbf{y}_j - \mathbf{y}_i)
\end{equation}
Stacking up equation (\ref{e15}), $\forall i \in [N]$, leads to the following:
\begin{equation} \label{e16}
    \dot{\mathbf{y}} = - k_p (\mathcal{L}(t) \otimes \mathbf{I}_n) \mathbf{y}
\end{equation}
where the Laplacian matrix $\mathcal{L}(t) = \mathbf{I}_N - \mathbf{W}(t)$, $[\mathbf{W}(t)]_{ij} = w_{ij}(t)$, and $\mathbf{y} := [\mathbf{y}_1^\text{T} \; \mathbf{y}_2^\text{T} \; \cdots \; \mathbf{y}_N^\text{T}]^\text{T}$. Here, $\mathbf{W}(t)$ is row-stochastic by design and has non-negative entries (check equations (\ref{e4}) and (\ref{e5})). 
\begin{theorem}
\label{thm:consensus}
Under Assumptions 1 and 2, consider $N$ robots with their states $\mathbf{y}_i(t) \in \mathbb{R}^n$, $\forall i \in [N]$, evolving under eq. (\ref{e15}), where $k_p > 0$, and the weight matrix $\mathbf{W}(t)$, with $[\mathbf{W}(t)]_{ij} = w_{ij}(t)$, updated using the OExpGF algorithm (eq. (\ref{e4})--(\ref{e11})). 
Then, for any initial condition $\mathbf{y}(0) \in \mathbb{R}^{Nn}$, asymptotic consensus is achieved:
\begin{equation}
\lim_{t \to \infty} \|\mathbf{y}_i(t) - \mathbf{y}_j(t)\| = 0 \quad \forall i,j \in [N]
\end{equation}
where $\mathbf{y}_i(t) \equiv \mathbf{y}_i$ and $\mathbf{y}(t) \equiv \mathbf{y}$ from eq. (\ref{e15}) and eq. (\ref{e16}). 
\end{theorem}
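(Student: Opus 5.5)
The plan is to treat (\ref{e16}) as a linear time-varying consensus system, $\dot{\mathbf{y}} = -k_p(\mathcal{L}(t)\otimes \mathbf{I}_n)\mathbf{y}$, and to show that the OExpGF weights keep the system in a class where consensus is known to hold. The key observation is that the weights are uniformly bounded below on the edges of $\mathcal{G}$, whatever the learning dynamics do. Working componentwise, the Kronecker structure decouples the $n$ coordinates, so it suffices to treat scalar states $y_i \in \mathbb{R}$.

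\textbf{Step 1: uniform properties of $\mathbf{W}(t)$.} From (\ref{e4}) and (\ref{e5}), $\hat{w}_{ij}(t)=\exp(-\eta_w l_{ij}(t))>0$ whenever $l_{ij}(t)$ is finite. Hence the first term in (\ref{e4}) is nonnegative, and I would record three facts:
\begin{itemize}
\item $w_{ij}(t)\ge \epsilon a_{ij}$ for all $j$;
\item $w_{ij}(t)=0$ whenever $a_{ij}=0$;
\item $\sum_j w_{ij}(t)=1$.
\end{itemize}
Also $w_{ij}(t)\le 1$. Thus every edge of $\mathcal{G}$ carries weight at least $\epsilon \underline{a}$, with $\underline{a}:=\min\{a_{ij}:a_{ij}>0\}$, and the graph of $\mathbf{W}(t)$ equals $\mathcal{G}$ for all $t$. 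By Assumption 2 it contains a rooted spanning tree at every instant, with uniformly positive and uniformly bounded weights. I also need well-posedness (no finite escape of $l_{ij}$). With the normalization of the gradient described after (\ref{e11}), $\dot{l}_{ij}$ is bounded by $|\log\gamma|\,|l_{ij}| + 1$, so $l_{ij}$ grows at most exponentially and stays finite on every bounded interval. For $\gamma=1$ it grows at most linearly. Measurability and local Lipschitzness of the right-hand side then follow because the whole closed loop is a smooth ODE in $(\mathbf{y}, l, \partial\mathbf{x}/\partial w)$.

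\textbf{Step 2: max--min contraction.} Define $M(t)=\max_i y_i(t)$, $m(t)=\min_i y_i(t)$ and $V=M-m$. Since each $\dot y_i$ is a convex combination of $y_j-y_i$ with nonnegative weights, the upper Dini derivative of $M$ is $\le 0$ and that of $m$ is $\ge 0$. So $V$ is nonincreasing and $\mathbf{y}$ remains bounded. To get strict decrease I would use the standard argument for time-varying row-stochastic interactions with a uniform lower bound on edge weights and a fixed rooted spanning tree. The reference is Moreau's theorem on continuous-time consensus, or the result of \cite{ren2008distributed} extended to time-varying weights.

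Concretely, over an interval of length $T$, the root's value drifts toward its in-neighbors at a rate bounded below by $k_p\epsilon\underline{a}$. Propagating along the tree of depth at most $N-1$ then yields $V(t+T)\le (1-\delta)V(t)$ for some $\delta\in(0,1)$ depending only on $k_p,\epsilon,\underline{a},N,T$. Iterating gives $V(t)\to 0$ exponentially, which is exactly $\|\mathbf{y}_i-\mathbf{y}_j\|\to 0$.

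\textbf{Expected obstacle.} The main difficulty is Step 2's uniform contraction estimate, because $\mathbf{W}(t)$ depends on the state through $l_{ij}$ and may vary arbitrarily within its admissible set. I would sidestep this entirely by invoking Moreau's theorem (Moreau, CDC 2004). It requires only piecewise continuity in $t$, uniform bounds $\epsilon\underline{a}\le w_{ij}\le 1$ on edges of $\mathcal{G}$, and the uniform rooted spanning tree of Step 1. None of these depend on the learning dynamics. The state dependence is harmless because one may freeze the realized trajectory $t\mapsto \mathbf{W}(t)$ and view the $\mathbf{y}$ dynamics as a linear time-varying system. A remaining subtlety is the edge orientation: in the stated convention, $a_{ij}>0$ means $i$ uses $j$'s information. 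I would check that "rooted spanning tree" in Assumption 2 matches the direction in which information flows from the root, as Moreau's condition requires.
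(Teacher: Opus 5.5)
Your proposal is correct and follows essentially the same route as the paper. The bound $w_{ij}(t)\ge\epsilon a_{ij}$ together with row-stochasticity keeps a rooted spanning tree with uniformly positive weights at every instant, the spread of the states is non-increasing, and Moreau's theorem supplies the contraction $V(t+T)\le(1-\delta)V(t)$, just as in the paper. Your differences are cosmetic or in your favor: you use the coordinatewise max--min spread instead of the Euclidean diameter of the convex hull, and you make explicit several points the paper leaves implicit, namely no finite escape of $l_{ij}$, $\delta$ independent of $t_0$, freezing the realized $\mathbf{W}(t)$ to obtain a linear time-varying system, and the edge-orientation convention in Assumption 2 (which must indeed be read in the information-flow direction for the theorem to hold).
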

\begin{proof}
Define the convex hull $\mathcal{C}(t)$ and its diameter $ D(t)$ as follows:
\begin{equation}
    \mathcal{C}(t) := \operatorname{conv}\{\mathbf{y}_1(t), \ldots, \mathbf{y}_N(t)\}
\end{equation}
\begin{equation}
    D(t) := \max_{i,j \in [N]} \|\mathbf{y}_i(t) - \mathbf{y}_j(t)\|
\end{equation}
From eq. (\ref{e15}), the $i^{th}$ robot's velocity $\dot{\mathbf{y}}_i(t)$ can be written as:
\begin{equation}
\dot{\mathbf{y}}_i(t) = k_p \left(\sum_{j=1}^N w_{ij}(t)\mathbf{y}_j(t) - \mathbf{y}_i(t)\right)
\end{equation}
Since $w_{ij}(t) \geq 0$ and $\sum_{j=1}^{N} w_{ij}(t) = 1$, the term $\sum_{j=1}^{N} w_{ij}(t)\mathbf{y}_j(t)$ is a convex combination of robots' states $\mathbf{y}_j(t)$, $\forall j \in [N]$, at time $t$. Thus, the $i^{th}$ robot's state $\mathbf{y}_i(t)$ moves toward a point inside the convex hull $\mathcal{C}(t)$. By standard results on differential inclusions in convex sets \cite[Lemma 1]{moreau2005stability}, the convex hull is forward-invariant:
\begin{equation} \label{e_bound}
\mathcal{C}(t) \subseteq \mathcal{C}(0) \quad \forall t \geq 0
\end{equation}
and therefore, $D(t)$ is bounded.

For any $t \geq 0$, since each robot's state $\mathbf{y}_i(t)$ remains within the convex hull of all robots' states (which is itself contained in the initial hull $\mathcal{C}(0)$), no two robots' states can move apart to create a larger pairwise distance than already exists. Formally, by the forward invariance of $\mathcal{C}(t)$ and the definition of diameter $D(t)$ as the maximum distance within this set \cite[Lemma 1]{moreau2005stability}:
\begin{equation} \label{e_non_inc}
D(t) \leq D(s) \quad \forall t \geq s \geq 0
\end{equation}
Thus, $D(t)$ is non-increasing in time.

From eq. (\ref{e4}) and (\ref{e5}), note that the weights $w_{ij}(t)$ are uniformly lower-bounded as $w_{ij}(t) \ge \epsilon a_{ij} > 0$. Under Assumption 2, the presence of a rooted spanning tree in the graph associated with $\mathbf{A}$ guarantees that the time-varying graph induced by $\mathbf{W}(t)$ retains a rooted spanning tree for all $t \geq 0$. By \cite[Theorem 2]{moreau2005stability}, under persistent rooted spanning tree connectivity, if $D(t_0) > 0$, then there exist $T, \delta > 0$ such that:
\begin{equation} \label{e_contract}
D(t_0 + T) \leq (1 - \delta)D(t_0) 
\end{equation}
This exponential contraction follows from information propagation through the spanning tree: the diameter strictly decreases over bounded intervals as long as disagreement ($D(t_0) > 0$) persists. 

Since $D(t)$ is non-increasing (eq. (\ref{e_non_inc})), bounded (eq. (\ref{e_bound})), and strictly contracts when positive (eq. (\ref{e_contract})), we have:
\begin{equation}
\lim_{t \to \infty} D(t) = 0
\end{equation}
Therefore, $\lim_{t \to \infty} \|\mathbf{y}_i(t) - \mathbf{y}_j(t)\| = 0$ for all $i,j \in [N]$, establishing asymptotic consensus.
\end{proof}
\subsection{Online Gradient Flow (OGF)}
\begin{theorem}
    Under Assumption 1, when the OGF algorithm (eq. (\ref{e3}), (\ref{e4.11}), (\ref{e5.11})) is run on the single integrator kinematics (eq. (\ref{e1})), $\forall i \in [N]$, the system's formation distortion error $e_i(t)$ (eq. (\ref{e12})) does not grow with time. 
\end{theorem}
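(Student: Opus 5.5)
The plan is a Lyapunov-type argument on the local distortion $e_i(t)$, treating the OGF update (\ref{e5.11}) as a gradient flow on $e_i$. I would differentiate $e_i(t)$ along trajectories using the chain rule through the adapted weights. Concretely, I view $e_i$ as depending on time through $\mathbf{x}_i(t)$, and view $\mathbf{x}_i(t)$ as depending on the parameters $\hat{w}_{ij}$ via the sensitivities $\partial \mathbf{x}_i/\partial \hat{w}_{ij}$ of (\ref{e11}). Under this modeling convention, the contribution of the weight adaptation to $\dot e_i$ is
\begin{equation*}
\sum_{j\in\mathcal{N}_i} \frac{\partial e_i}{\partial \hat{w}_{ij}}\,\dot{\hat{w}}_{ij} \;=\; -\eta \sum_{j\in\mathcal{N}_i}\Big(\frac{\partial e_i}{\partial \hat{w}_{ij}}\Big)^2 \;\le\; 0,
\end{equation*}
using (\ref{e5.11}) directly. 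This is the standard online gradient descent observation: the learned parameters always move in the direction that does not increase the instantaneous loss.

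The second step handles the nominal part of $\dot e_i$, namely the motion of $\mathbf{x}_i$ and the neighbors' $\mathbf{x}_j$ under the consensus law (\ref{e3}) with the weights frozen. I would substitute $\mathbf{y}_i=\mathbf{x}_i-\mathbf{x}_i^*$, which Assumption 1 permits, and write $e_i=\tfrac12\sum_{j\in\mathcal{N}_i}\|\mathbf{y}_j-\mathbf{y}_i\|^2$. Then
\begin{equation*}
\dot e_i=\sum_{j\in\mathcal{N}_i}\langle \mathbf{y}_j-\mathbf{y}_i,\dot{\mathbf{y}}_j-\dot{\mathbf{y}}_i\rangle .
\end{equation*}
The self term $-\langle \sum_j(\mathbf{y}_j-\mathbf{y}_i),\dot{\mathbf{y}}_i\rangle$ becomes $-k_p\big\langle \sum_j(\mathbf{y}_j-\mathbf{y}_i),\sum_j w_{ij}(\mathbf{y}_j-\mathbf{y}_i)\big\rangle$. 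At least near $\hat w_{ij}=1$, where $w_{ij}=a_{ij}>0$, this is nonpositive. The $\epsilon a_{ij}$ floor in (\ref{e4.11}) keeps part of this term sign-definite.

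Combining the two pieces, I would argue that $\dot e_i\le 0$ up to the coupling terms coming from the neighbors' motion. I would then integrate to conclude $e_i(t)\le e_i(0)$ plus a bounded quantity, so that $e_i$ does not grow with time.

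The main obstacle is the neighbor term $\langle \mathbf{y}_j-\mathbf{y}_i,\dot{\mathbf{y}}_j\rangle$, together with the possibility that unconstrained weights $\hat w_{ij}$ turn $w_{ij}$ negative. A purely local argument cannot sign either of these. My first attempt would be to take the aggregate $E=\sum_i e_i$ and use the gradient structure: each agent descends its own $e_i$, so the adaptation contribution to $\dot E$ remains $-\eta\sum_{i,j}(\partial e_i/\partial \hat w_{ij})^2$. I would then bound the cross terms by $E$ itself and conclude boundedness by a Gr\"onwall or comparison argument.

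If that fails, I would fall back to the interpretation implicit in the paper's framework. There the sensitivity (\ref{e11}) treats the neighbors' trajectories as exogenous, so $\partial e_i/\partial \hat w_{ij}$ is the full derivative of $e_i$ with respect to the decision variable. Under that interpretation, the statement reduces to the descent inequality of the first step, which gives $\frac{d}{dt}e_i\le 0$ along the learning dynamics.
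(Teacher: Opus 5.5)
Your fallback in the final paragraph is exactly the paper's proof. The paper writes $\frac{de_i}{dt}=\sum_{j\in\mathcal{N}_i}\frac{\partial e_i}{\partial w_{ij}}\frac{dw_{ij}}{dt}$ and uses eq.~(\ref{e4.11}) to get $\dot w_{ij}=(1-\epsilon)a_{ij}\dot{\hat w}_{ij}$ and $\frac{\partial e_i}{\partial w_{ij}}=((1-\epsilon)a_{ij})^{-1}\frac{\partial e_i}{\partial \hat w_{ij}}$, so the factors cancel. It then substitutes eq.~(\ref{e5.11}) to obtain $-\eta\sum_{j}(\partial e_i/\partial\hat w_{ij})^2\le 0$. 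That is your first step, with the weight channel taken to be all of $\dot e_i$. Nothing from your second and third steps appears in the paper.

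Those middle steps would not go through anyway, for three reasons.
\begin{itemize}
\item The self term $-k_p\langle\sum_j(\mathbf{y}_j-\mathbf{y}_i),\sum_j w_{ij}(\mathbf{y}_j-\mathbf{y}_i)\rangle$ is not nonpositive at $\hat w_{ij}=1$ unless the $a_{ij}$ are equal over $\mathcal{N}_i$. For example, scalar displacements $1,-0.9$ with weights $0.1,0.9$ make it positive.
\item In the aggregate $E=\sum_i e_i$, the adaptation contribution is not $-\eta\sum_{i,j}(\partial e_i/\partial\hat w_{ij})^2$. Agent $k$'s weights move $\mathbf{x}_k$, which enters every $e_i$ with $k\in\mathcal{N}_i$, but agent $k$ descends only $e_k$. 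The cross products $\frac{\partial e_i}{\partial\hat w_{kl}}\frac{\partial e_k}{\partial\hat w_{kl}}$ with $i\neq k$ are therefore unsigned.
\item Gr\"onwall gives at best $E(t)\le E(0)e^{ct}$, which permits growth. With OGF's unbounded weights there is not even a uniform $c$.
\end{itemize}

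Your diagnosis of the obstacle is nevertheless correct, and it is not a technicality. The paper's identity drops the frozen-weight terms $\langle\nabla_{\mathbf{x}_i}e_i,\dot{\mathbf{x}}_i\rangle+\sum_{j\in\mathcal{N}_i}\langle\nabla_{\mathbf{x}_j}e_i,\dot{\mathbf{x}}_j\rangle$, and these can make $e_i$ increase. Take the directed 3-cycle $a_{12}=a_{23}=a_{31}=1$ with $n=1$ and $(\mathbf{y}_1,\mathbf{y}_2,\mathbf{y}_3)(0)=(0,0,1)$; only Assumption 1 is required, as in the theorem. Then $\mathbf{y}_2-\mathbf{y}_1$ vanishes at $t=0$ with derivative $k_p w_{23}(0)=k_p\neq 0$. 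Hence $e_1(t)=\tfrac12(\mathbf{y}_2-\mathbf{y}_1)^2>0=e_1(0)$ for small $t>0$, whatever the learning rule does. Monotonicity of $e_i$ along the true closed loop is therefore false. What your fallback and the paper's proof actually establish is only that the weight-adaptation contribution to $\dot e_i$ is nonpositive. You are right to present it as a conclusion under that interpretation rather than as a proof of the literal claim.
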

\begin{proof}
    Consider the error $e_i(t)$ and the weight $w_{ij}(t)$ from eq. (\ref{e4.11}). Note that $\frac{de_i(t)}{dt} = \sum_{\forall j \in \mathcal{N}_i}\frac{\partial {e}_i(t)}{\partial {w}_{ij}}\frac{dw_{ij}(t)}{dt}$. Further, using eq. (\ref{e4.11}), note that $\frac{dw_{ij}(t)}{dt} = (1-\epsilon) a_{ij} \frac{d\hat{w}_{ij}(t)}{dt}$ and $\frac{\partial {e}_i(t)}{\partial {w}_{ij}} = ((1-\epsilon)a_{ij})^{-1} \frac{\partial {e}_i(t)}{\partial \hat{w}_{ij}}$. Using eq. (\ref{e5.11}), this implies $\frac{de_i(t)}{dt} = - \eta \sum_{\forall j \in \mathcal{N}_i} (\frac{\partial {e}_i(t)}{\partial \hat{w}_{ij}})^2 \leq 0$. 
\end{proof}

\section{Performance Evaluation}
The continuous-time OGF and OExpGF are time-discretized (Algorithms \ref{alg01} and \ref{alg02}) and implemented alongside a continuous-time single integrator plant simulated in MATLAB using the ode45 solver, with a sampling period $\Delta T$ and a time horizon $T$. Their performance is evaluated under a 2-D scenario where robots execute constant-velocity commands within a decentralized, leaderless formation while being subjected to intermittent disturbances (e.g., wind gusts).  

Consider the plant dynamics (eq. (\ref{e1})) affected by the disturbance $\mathbf{\mu}_{i}$ as follows: 
\begin{equation}
     \dot{\mathbf{x}}_i = \mathbf{u}_i + \mathbf{\mu}_{i}\; , \; \mathbf{x}_i(0)  
\end{equation}
where $\mathbf{\mu}_{i}  \in \mathbb{R}^2$ models the intermittent disturbances. Here, the control input $\mathbf{u}_i$ is given as follows: 
\begin{equation} \label{eq:u_i}
    \mathbf{u}_i = \mathbf{v}_{\text{const.},i} + \mathbf{u}_{\text{form.},i} 
\end{equation}
where $\mathbf{v}_{\text{const.},i} \in \mathbb{R}^2$ is the $i^{th}$ robot's commanded constant velocity and $\mathbf{u}_{\text{form.},i}$ is the formation control command. For this simulation study, $\mathbf{v}_{\text{const.},i} = \frac{1}{\sqrt{2}}[1 \; 1]^T$ $m/s$. 
Further, consider a saturation limit on the control input due to limited actuator control authority, resulting in a bound on $\mathbf{u}_i = [u_{i,1} \; u_{i,2}]^T$ as $|u_{i,k}| \leq u_{\max} > 0$, $k=1,2$, where $(\cdot)^T$ denotes the transpose.

Here, $\mathbf{u}_{\text{form.},i}$ may correspond to fixed-weights consensus ($\mathbf{u}_{\text{Fxd.Wts.},i}$, eq. \eqref{e2}), pure edge-level adaptation ($\mathbf{u}_{\text{edge},i}$), pure node-level adaptation ($\mathbf{u}_{\text{node},i}$), or their combination ($\mathbf{u}_{\text{edge},i} + \mathbf{u}_{\text{node},i}$). The edge-level term $\mathbf{u}_{\text{edge},i}$ is computed via OGF or OExpGF (eq. \eqref{e3}), while the node-level term $\mathbf{u}_{\text{node},i}$ is realized using a robust controller such as Adaptive Gain, Decay Gain, or an anti-windup Disturbance Observer (DOB) \cite{r1}, \cite{r2}, \cite{r4}, as follows (with $t = k \ \Delta T$):

\textbf{Adaptive Gain:} The control $\mathbf{u}_{\text{node},i}$ is computed as \cite{r1}:
\begin{equation}
    \mathbf{u}_{\text{node},i}(k) =  k_p \sum_{j = 1}^N \left(1 + \beta_{i}(k)\right) a_{ij} (\mathbf{\zeta}_{ji}(k) - \mathbf{\zeta}_{ji}^*)
\end{equation}
where the adaptive gain $\beta_{i}(k)$ is updated as:
\begin{equation}\begin{array}{rr}
    \beta_{i}(k + 1) =  & \max \{ 0,  \beta_{i}(k) + \Delta T  (\sigma  \|\nabla_{\mathbf{x}_i} e_i(k)\|^2 \\
     & - \kappa  \beta_{i}(k))\}
\end{array}
\end{equation}
with the adaptation rate $\sigma = 0.5$ and the leakage (forgetting) factor $\kappa = 0.1$. 

\textbf{Decay Gain:} The control $\mathbf{u}_{\text{node},i}$ is computed as \cite{r2}:
\begin{equation}
    \mathbf{u}_{\text{node},i}(k) =  k_p \sum_{j =1}^N \frac{1}{(1 + k \ \Delta T)^{\alpha}} a_{ij} (\mathbf{\zeta}_{ji}(k) - \mathbf{\zeta}_{ji}^*)
\end{equation}
where the decay exponent $\alpha = 0.6$. 

\textbf{Disturbance Observer:} The control $\mathbf{u}_{\text{node},i}$ is computed as \cite{r4}:
\begin{equation}
    \mathbf{u}_{\text{node},i}(k) = k_p \sum_{j=1}^N  a_{ij} (\mathbf{\zeta}_{ji}(k) - \mathbf{\zeta}_{ji}^*) - \text{sat}_{u_{\max}} \left( \hat{\mathbf{d}}_i(k) \right)
\end{equation}
where $\hat{\mathbf{d}}_i(k)$ is the estimated disturbance and $\text{sat}_{u_{\max}}(\cdot)$ denotes the element-wise saturation function to enforce actuator limits. The estimated disturbance $\hat{\mathbf{d}}_i(k)$ is calculated as: 
\begin{equation}
    \hat{\mathbf{d}}_i(k) = \mathbf{\xi}_i(k) + \Lambda \ \mathbf{x}_i(k) 
\end{equation}
where $\Lambda = 10 \cdot \text{diag}([1 \ 1])$ is the observer gain matrix, and the auxiliary state vector $\mathbf{\xi}_i(k)$ is updated as: 
\begin{equation}
    {\mathbf{\xi}}_i(k+1) = {\mathbf{\xi}}_i(k) - \Delta T \ \Lambda \left( \mathbf{\xi}_i(k) + \Lambda \mathbf{x}_i(k) + \mathbf{u}_{i}(k) \right)
\end{equation}
where $\mathbf{u}_{i}(k)$ is given by eq. (\ref{eq:u_i}). 

\begin{figure}[t]
    \centering
    \includegraphics[width=0.48\textwidth]{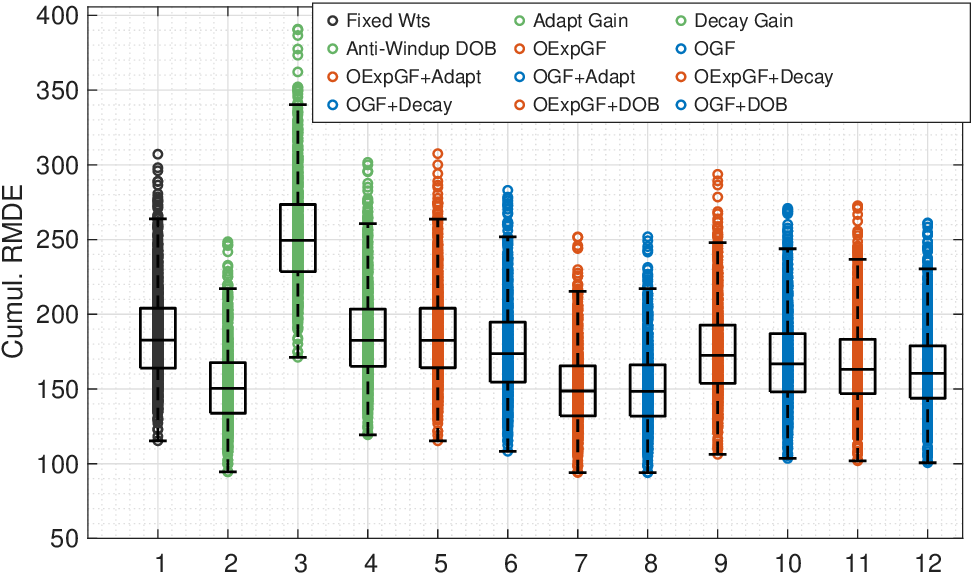}
    \caption{Cumulative Root Mean Distortion Error (RMDE) (in $m$) for 1000 simulation runs; total no. of robots, N $=12$; no. of robots experiencing disturbances, Nf $=6$; 1: Fixed Wts, 2: Adaptive Gain, 3: Decay Gain, 4: Anti-Windup Disturbance Observer (DOB), 5: OExpGF, 6: OGF, 7: OExpGF + Adaptive Gain, 8: OGF + Adaptive Gain, 9: OExpGF + Decay Gain, 10: OGF + Decay Gain, 11: OExpGF + DOB, 12: OGF + DOB.}
  \label{Fig:PLOT}
\end{figure}
The total number of robots is $N = 12$, with an actuator saturation limit of $u_{\text{max}} = 2$ m/s. The robot-robot interaction network is modeled as a strongly connected directed circulant graph $C_{12}(1, 2)$, where the adjacency matrix $A$ is defined such that each robot $i$ observes two neighbors $j$ in a unidirectional sequence $j \in \{(i \text{ mod } N) + 1, (i+1 \text{ mod } N) + 1\}$ with edge weights $a_{ij} = 0.5$, and $a_{ij} = 0$ otherwise. This structure maintains a $2$-regular directed topology, and the choice of weights $a_{ij}$ reflects the assumption that all robots are equally functional prior to operation. The target displacements $\mathbf{\zeta}_{ji}^* \in \mathbb{R}^2$ are defined to ensure that the robots maintain a two-dimensional dodecagonal formation of radius $10$ ${m}$, consistent with the underlying directed topology. The time horizon is $T = 9$ $sec.$ with a sampling period $\Delta T$ of $0.1$ $sec$. The initial weights $\hat{w}_{ij}(0)$ for both OExpGF and OGF are set to $1$, and the initial robot positions correspond to the desired formation.
\begin{remark}
    The directed circulant topology $C_{12}(1,2)$ reflects the sensing constraints of robots, where each robot monitors only two neighbors within a forward-facing sector consistent with the limited field-of-view (FOV) of sensors such as depth cameras or LiDAR \cite{moshtagh2007distributed}. From a control perspective, this topology provides a rigorous evaluation of the controller, as its Laplacian has complex eigenvalues that typically produce stronger transient oscillations and slower convergence than bidirectional topologies \cite{olfati2004consensus}.
\end{remark}

Half of the robots ($N_f = 6$) are subjected to disturbances over the interval $t \in [1,8]$ sec. These disturbances are applied intermittently at $t = 1$, $2$, $4$, and $6$ sec. to randomly selected robots, thereby emulating intermittent perturbations. The disturbance $\mathbf{\mu}_{i}$ on affected robots is generated as $1 \cdot [Unif(-1,1); Unif(-1,1)]^T + [\mathcal{N}(0,100); \mathcal{N}(0,100)]^T$ (in $m/s$), where $Unif(-1,1) \in [-1,1]$ is a uniform distribution and $\mathcal{N}(0,100)$ a normal distribution with zero mean and variance $100$. Robots not affected by disturbances are subject to process noise $[\mathcal{N}(0,0.01); \mathcal{N}(0,0.01)]^T$ (in $m/s$). Sensor noise in the relative displacement estimates $\zeta_{ji}$ is $[\mathcal{N}(0,0.01); \mathcal{N}(0,0.01)]^T$ (in $m$). Here, $(\cdot)^T$ denotes the transpose.

\begin{figure*}[htbp]
    \centering
    \includegraphics[width=\textwidth]{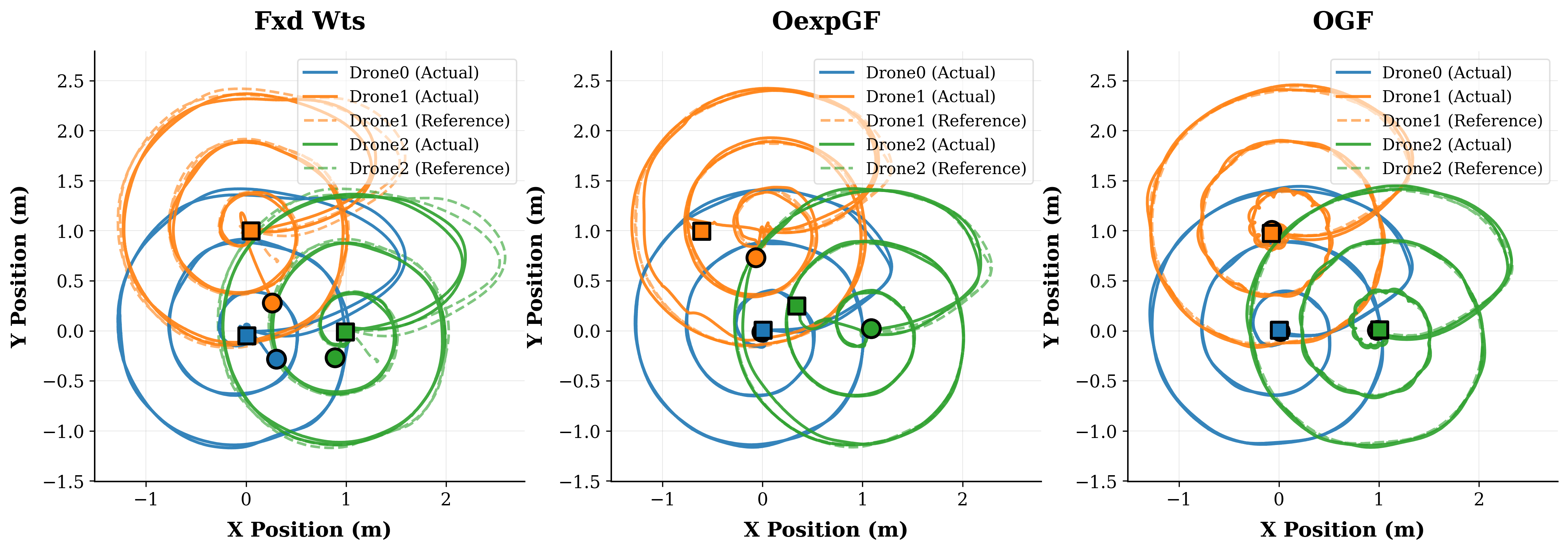}
    \caption{Top view of actual drone trajectories (solid) and velocity-based references (dashed); references are computed from Drone 0’s measured velocity and formation specifications. Circles and squares denote start and end points.}
  \label{Fig:TrajVel}
\end{figure*}

The algorithms are evaluated using the Cumulative Root Mean Distortion Error (Cumul. RMDE) at time $T$, defined as $\sum_{t=0:0.1:T} \sqrt{\frac{1}{N} \sum_{i=1}^N 2e_{i}(t)}$ (check eq. (\ref{e12})). 

Based on a simulation-based parametric study, the parameters of OExpGF and OGF are set as $\eta_w = 2$, $\gamma = 0.01$, $\epsilon = 0.01$, and $\eta = 0.1$, $\epsilon = 0.01$, respectively. The control gain $k_p = 1$ for all the algorithms.
For each method, $1000$ simulations are conducted, and the results are collected as points to generate the corresponding box plots (see Fig. \ref{Fig:PLOT}). On each box, the central mark denotes the median, and the bottom and top edges represent the $25^{th}$ and $75^{th}$ percentiles, respectively.   

From Fig. \ref{Fig:PLOT}, the Fixed-Weights consensus, OExpGF, and OGF yield cumulative RMDE (median) values of $182.68$, $182.52$, and $173.72$ $m$, respectively. Among standalone node-level methods, Adaptive Gain achieves $150.4$ $m$, Decay Gain $249.5$ $m$, and DOB $182.55$ $m$.

When edge-level adaptation is incorporated, consistent improvements over the physical-layer counterparts are observed. Specifically, OExpGF + Adaptive Gain and OGF + Adaptive Gain reduce cumulative RMDE (median) to $148.65$ $m$ and $148.36$ $m$, corresponding to improvements of approximately $1.2\%$ and $1.4\%$ over the Adaptive Gain method, respectively. 
Relative to Decay Gain, OExpGF + Decay Gain and OGF + Decay Gain reduce cumulative RMDE (median) to $172.46$ $m$ and $166.81$ $m$, yielding substantial improvements of approximately $30.88\%$ and $33.14\%$, respectively.
Similarly, compared to the standalone DOB method, OExpGF + DOB and OGF + DOB achieve cumulative RMDE (median) of $163.18$ $m$ and $160.46$ $m$, corresponding to improvements of approximately $10.61\%$ and $12.1\%$.

These results demonstrate that combining physical-layer disturbance rejection ($\mathbf{u}_{\text{node},i}$) with topological (edge-level) adaptation ($\mathbf{u}_{\text{edge},i}$) synergistically improves robustness. The approach achieves gains of up to $33.14\%$ over the low-performing Decay Gain and still reduces the median cumulative RMDE by $1.2\%$ when augmenting the high-performing Adaptive Gain, highlighting the complementary benefits of edge-level adaptation even near physical-layer performance limits.

\section{Experimental Validation}
While the previous section demonstrates that integrating TOLD with physical-layer control yields peak performance, the hardware experiments deliberately omit these node-level robust control augmentations. This isolates the core theoretical contribution, demonstrating the raw efficacy of the proposed edge-level topological adaptation against unmodeled real-world aerodynamics and sensor noise.

\subsection{Experimental Setup}
The algorithms are validated on a Qualisys motion-capture testbed using Crazyflie 2.0 quadcopters. The primary evaluation uses three drones, with additional tests on a four-drone setup. 
To evaluate leader–follower formation performance, the leader (Drone 0) is manually piloted along a predefined upward spiral trajectory, a challenging 3D path with continuously varying orientation. Each follower receives relative position measurements of its two spatial neighbors from the motion-capture system, while the leader receives no neighbor information, resulting in a directed interaction topology. The followers execute the formation algorithms using off-board velocity commands transmitted to the quadcopters, relying solely on neighbor-relative position measurements consistent with displacement-based formation control. 

\subsection{Comparative Performance Analysis}
Figure \ref{Fig:TrajVel} overlays the top-view trajectories of all three drones under the Fixed Weight, OExpGF, and OGF controllers, comparing actual (solid) against velocity-based reference (dashed) trajectories generated from Drone 0's measured velocity and the desired formation specifications. OGF (green) yields the tightest trajectory clustering, with follower paths closely shadowing the leader throughout the spiral. OExpGF (red) improves substantially over Fixed Weight, reducing inter-drone separation during rapid directional changes. In the Fixed Weights case (blue), the leader (Drone~0) overshoots during aggressive spiral segments in the upper-right region, yet the followers fail to track this deviation, resulting in significant formation divergence. This inability of fixed weights to compensate for sudden leader excursions highlights the fundamental limitation of non-adaptive consensus weights, further motivating the real-time weight adaptation provided by OGF and OExpGF. Quantitatively, relative to Fixed Weight, OGF reduces median velocity-reference tracking errors by $53.88\%$ (Drone 1) and $54.85\%$ (Drone 2), and mean errors by $40.65\%$ and $41.35\%$, with reduced interquartile ranges ($13.97\%$ and $19.28\%$) indicating improved consistency throughout the spiral maneuver. These qualitative and quantitative trends confirm that OGF's unconstrained weight adaptation most effectively minimizes distortion error.

\subsection{Formation Distortion Error}
Figure \ref{Fig:distortionBox} shows the component-wise and magnitude distributions of formation distortion errors across all three algorithms. The magnitude subplot indicates that OGF attains the lowest median error ($\approx 0.094$ $m$) with the tightest interquartile range, consistent with the MATLAB results in Section IV. OExpGF provides intermediate performance with a median error of $\approx 0.170$ $m$, while Fixed Weight exhibits the largest median error ($\approx 0.248$ $m$) and highest variance. Across individual spatial components, OGF consistently achieves the most concentrated error distributions centered near zero, whereas Fixed Weight shows the widest spread in all three axes. These per-axis results confirm that OGF's superior magnitude performance is not driven by a single spatial dimension but reflects uniformly tighter formation maintenance across all directions of motion.

\subsection{Parametric Study}
To evaluate algorithm hyperparameters, we conducted parametric studies using a fixed control gain of $ k_p = 1.8$; full numerical results are provided in the supplementary video. For OExpGF, higher learning rates ($\eta_w$) and lower discount factors ($\rho$) result in reduced mean and median errors, with optimal performance achieved at $\eta_w=5$ and $\rho=0.1$. For OGF, increasing the learning rate ($\eta$) consistently improves all metrics, with $\eta=0.01$ yielding errors $46$-$60\%$ lower than $\eta=0.0001$. Smaller gradient-integration windows ($\text{win}=1$) perform best at high learning rates, while window-size effects are minor at lower rates. Based on these results, subsequent experiments use $\eta_w=8$ and $\rho=0.5$ for OExpGF, and $\eta=0.01$ with $\text{win}=1$ for OGF. Four-drone experiments (shown in the supplementary) confirmed consistent behavior across all algorithms, indicating that the observed performance characteristics scale appropriately within the tested system size.

\begin{figure}[!htb]
    \centering
    \includegraphics[width=0.5\textwidth]{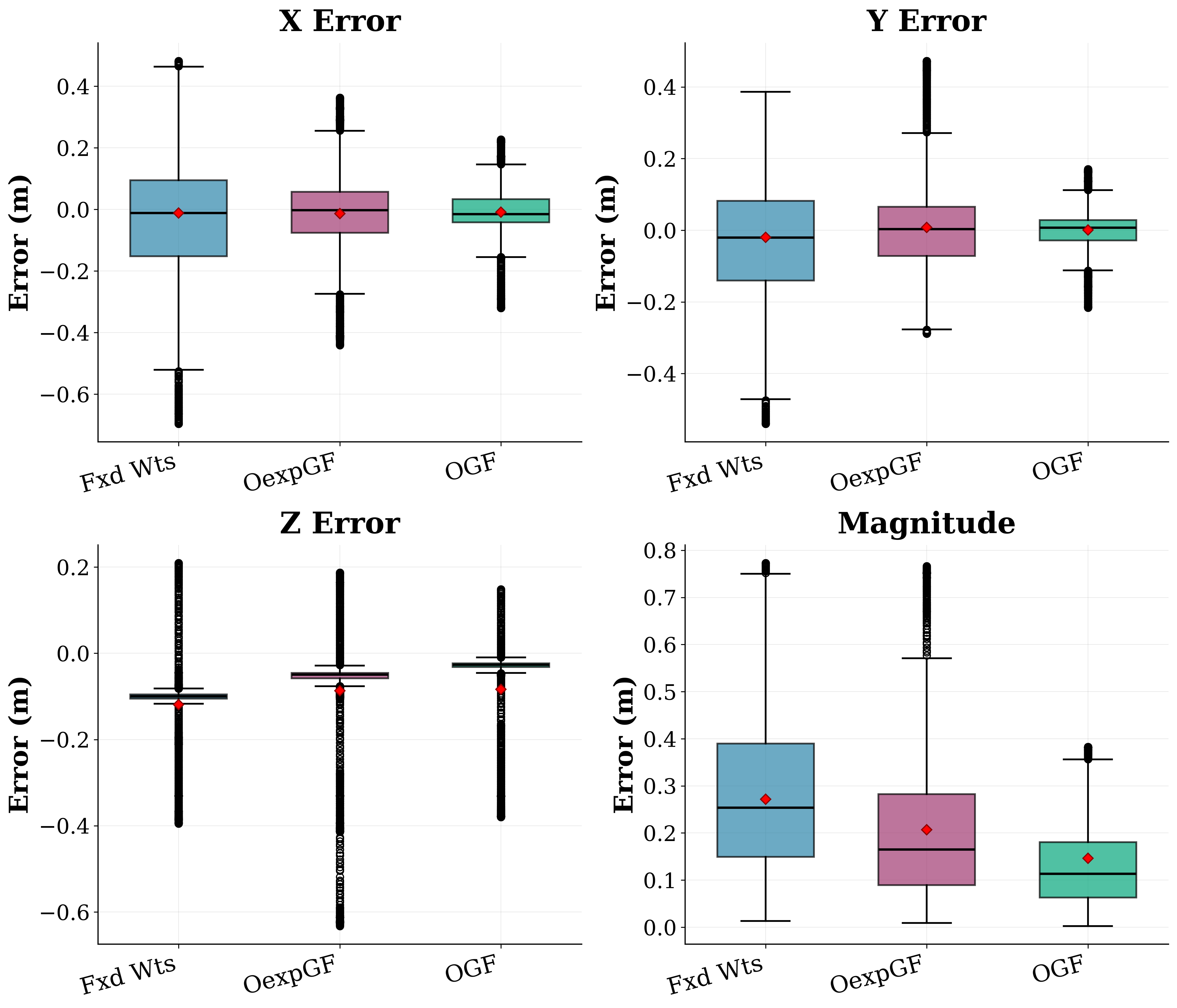}
    \caption{Component and magnitude distributions of formation distortion error across the experimental duration, aggregated over follower drones. The OGF algorithm consistently yields the lowest median error and tightest interquartile range across all dimensions, demonstrating superior spatial coordination compared to OExpGF and the Fixed-Weights baseline.}
    \label{Fig:distortionBox}
\end{figure}

\subsection{Weight Evolution}

\begin{figure}[!htb]
    \includegraphics[width=0.48\textwidth]{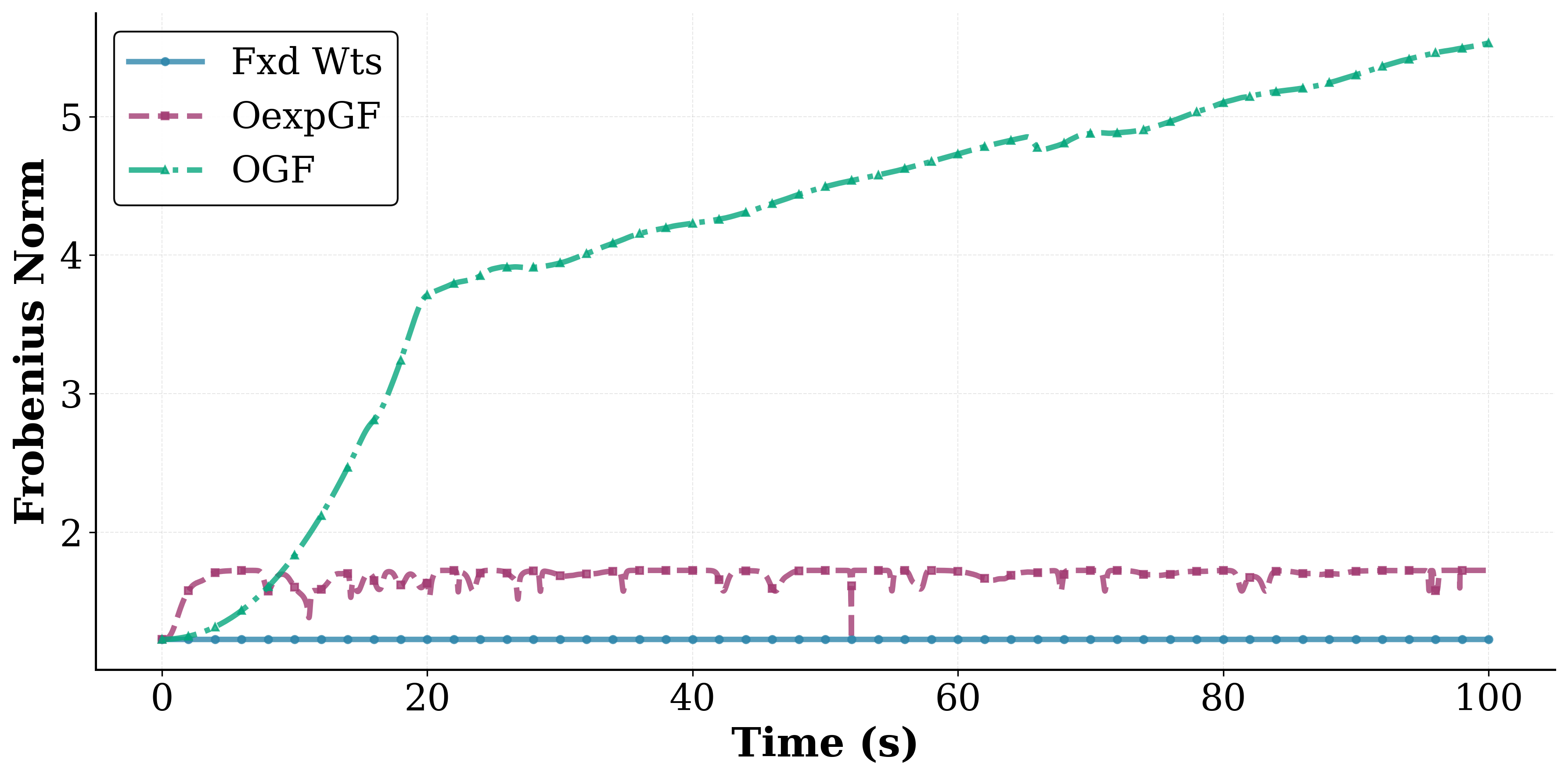}
    \caption{Temporal evolution of weight matrix Frobenius norms over a 100-second experimental run. }
  \label{Fig:forb}
\end{figure}

A key observation from Figure \ref{Fig:forb} is the evolution of the Frobenius norm of the weight matrix. OGF shows large weight growth, with the norm rising from roughly $1.2$ at initialization to over $7.0$ by the end of the run, whereas Fixed Weight and OExpGF remain bounded below $2.0$ throughout. This behavior results from OGF’s unconstrained update rule (eq. (\ref{e4.11}) and (\ref{e5.11})) where $w_{ij}(t)\in\mathbb{R}$ can grow to larger values as the algorithm minimizes (noisy) neighborhood distortion. Although this aggressive adaptation improves formation accuracy, it raises two practical concerns: (i) large weights may cause the formation objective to overpower secondary objectives (e.g., trajectory tracking or energy minimization), disrupting multi-objective balance; and (ii) on embedded platforms, growing weight magnitudes increase numerical-precision demands, risking memory overflow or instability during long-duration deployments.

Although OGF delivers the best formation accuracy as demonstrated in the preceding subsections, the large weight growth introduces implementation concerns not evident in the simulations. The finite simulation horizon in the Performance Evaluation section may simply have been too short to expose this behavior. In long-duration missions or hierarchical control settings, such weight escalation may necessitate safeguards such as weight clipping or adaptive learning-rate scheduling. In contrast, OExpGF's non-negative convex weights remain naturally bounded through the normalization in eq. (\ref{e4}), offering a practical balance between adaptability and stability. This constraint, however, limits its ability to fully match the adaptability of OGF's unconstrained weights, yielding intermediate performance; nevertheless, OExpGF can be combined with more explicit disturbance-handling strategies (robust node-level control augmentations) to achieve improved overall performance.

\section{CONCLUSION}
This paper introduces Topological Online Learning for Displacement-based (TOLD) formation control, an edge-level adaptation framework that actively minimizes formation distortion in real-time. By updating interaction weights online, TOLD complements conventional node-level robust controllers and provides a lightweight, training-free alternative to learning-based topological methods. Two strategies are proposed under the TOLD formation control framework: OGF, with unconstrained weight dynamics, and OExpGF, enforcing non-negative convex weights. Theoretical analysis establishes asymptotic consensus for OExpGF and bounded formation distortion for OGF. Simulations with twelve robots under intermittent disturbances demonstrated $1.2\%$–$33.14\%$ reduction in median cumulative RMDE when augmenting node-level controllers with TOLD. Hardware experiments with Crazyflie 2.0 quadrotors further showed over $62\%$ (OGF) and $31.4\%$ (OExpGF) reduction in median formation distortion compared to fixed-weight consensus. Overall, the results reveal a practical trade-off between OGF’s higher accuracy and OExpGF’s bounded, implementation-friendly weight dynamics, positioning TOLD as an effective and modular topological overlay for robust multi-robot formation control.





\bibliographystyle{unsrt}
\bibliography{IEEEabrv}

\end{document}